\def\eqref#1{equation~\ref{#1}}
\def\1{\bm{1}}
\DeclareMathAlphabet{\mathsfit}{\encodingdefault}{\sfdefault}{m}{sl}
\SetMathAlphabet{\mathsfit}{bold}{\encodingdefault}{\sfdefault}{bx}{n}
\newtheorem{theorem}{Theorem}
\newtheorem{proposition}{Proposition}
\newtheorem{assumption}{Assumption}
\theoremstyle{definition}
\newcommand{\bE}{\boldsymbol{\mathrm{E}}}
\newcommand{\bV}{\boldsymbol{\mathrm{V}}}
\newcommand{\bx}{\boldsymbol{x}}
\newcommand{\bw}{\boldsymbol{\mathrm{w}}}
\definecolor{LightCyan}{rgb}{0.88,1,1}
\definecolor{LightYellow}{rgb}{1,1,0.88}
\definecolor{DarkYellow}{rgb}{0.8352941176470589,0.7137254901960784,0.0392156862745098}
\definecolor{Yellow}{rgb}{1,1,0.0784313725490196}
\definecolor{mydarkgreen}{RGB}{69, 153, 44}
\definecolor{green}{RGB}{0, 128, 0}
\newcommand{\colorcell}[1]{%
  \ifdim #1 pt < 0.1pt
    \cellcolor{green!50}
  \else
    \ifdim #1 pt < 0.2pt
      \cellcolor{green!45}
    \else
      \ifdim #1 pt < 0.3pt
        \cellcolor{green!40}
      \else
        \ifdim #1 pt < 0.4pt
          \cellcolor{green!35}
        \else
          \ifdim #1 pt < 0.6pt
            \cellcolor{green!30}
          \else
            \ifdim #1 pt < 0.7pt
              \cellcolor{green!25}
            \else
              \ifdim #1 pt < 0.8pt
                \cellcolor{green!20}
              \else
                \ifdim #1 pt < 0.9pt
                  \cellcolor{green!15}
                \else
                  \cellcolor{green!15}
                \fi
              \fi
            \fi
          \fi
        \fi
      \fi
    \fi
  \fi
  #1
}
\definecolor{red}{RGB}{255,0,0}
\newcommand{\colorcellr}[1]{%
  \ifdim #1 pt < 0.1pt
    \cellcolor{red!15}
  \else
    \ifdim #1 pt < 0.2pt
      \cellcolor{red!20}
    \else
      \ifdim #1 pt < 0.3pt
        \cellcolor{red!25}
      \else
        \ifdim #1 pt < 0.4pt
          \cellcolor{red!30}
        \else
          \ifdim #1 pt < 0.6pt
            \cellcolor{red!35}
          \else
            \ifdim #1 pt < 0.7pt
              \cellcolor{red!40}
            \else
              \ifdim #1 pt < 0.8pt
                \cellcolor{red!45}
              \else
                \ifdim #1 pt < 0.9pt
                  \cellcolor{red!50}
                \else
                  \cellcolor{red!50}
                \fi
              \fi
            \fi
          \fi
        \fi
      \fi
    \fi
  \fi
  #1
}
\newcommand{\colorcellece}[1]{%
  \ifdim #1 pt < 0.02pt
    \cellcolor{green!45}
  \else
    \ifdim #1 pt < 0.05pt
      \cellcolor{green!40}
    \else
      \ifdim #1 pt < 0.07pt
        \cellcolor{green!35}
      \else
        \ifdim #1 pt < 0.10pt
          \cellcolor{green!30}
        \else
          \ifdim #1 pt < 0.15pt
            \cellcolor{green!25}
          \else
            \ifdim #1 pt < 0.20pt
              \cellcolor{green!20}
            \else
              \ifdim #1 pt < 0.25pt
                \cellcolor{red!20}
              \else
                \ifdim #1 pt < 0.30pt
                  \cellcolor{red!25}
                \else
                  \cellcolor{red!25}
                \fi
              \fi
            \fi
          \fi
        \fi
      \fi
    \fi
  \fi
  #1
}
\newcommand{\colorcelltace}[1]{%
  \ifdim #1 pt < 0.002pt
    \cellcolor{green!45}
  \else
    \ifdim #1 pt < 0.005pt
      \cellcolor{green!40}
    \else
      \ifdim #1 pt < 0.007pt
        \cellcolor{green!35}
      \else
        \ifdim #1 pt < 0.010pt
          \cellcolor{green!30}
        \else
          \ifdim #1 pt < 0.015pt
            \cellcolor{green!25}
          \else
            \ifdim #1 pt < 0.020pt
              \cellcolor{green!20}
            \else
              \ifdim #1 pt < 0.025pt
                \cellcolor{green!15}
              \else
                \ifdim #1 pt < 0.030pt
                  \cellcolor{green!10}
                \else
                  \cellcolor{green!10}
                \fi
              \fi
            \fi
          \fi
        \fi
      \fi
    \fi
  \fi
  #1
}
\title{Something for (almost) nothing:\\
improving deep ensemble calibration \\ using unlabeled data}
\author{Konstantinos Pitas, Julyan Arbel \\
Statify Team\\
Inria Grenoble Rhône-Alpes\\
Grenoble, France \\
\texttt{\{pitas.konstantinos, julyan.arbel\}@inria.fr} \\
}
\begin{document}

\maketitle

\begin{abstract}
We present a method to improve the calibration of deep ensembles in the small training data regime in the presence of unlabeled data. Our approach is extremely simple to implement: given an unlabeled set, for each unlabeled data point, we simply fit a different randomly selected label with each ensemble member. We provide a theoretical analysis based on a PAC-Bayes bound which guarantees that if we fit such a labeling on unlabeled data, and the true labels on the training data, we obtain low negative log-likelihood and high ensemble diversity on testing samples. Empirically, through detailed experiments, we find that for low to moderately-sized training sets, our ensembles are more diverse and provide better calibration than standard ensembles, sometimes significantly.
\end{abstract}

\section{Introduction}
Deep ensembles have gained widespread popularity for enhancing both the testing accuracy and calibration of deep neural networks. This popularity largely stems from their ease of implementation and their consistent, robust improvements across various scenarios. Both empirically and theoretically, the performance of deep ensembles is intrinsically tied to their diversity \citep{fort2019deep,masegosa2019learning}. By averaging predictions from a more diverse set of models, we mitigate prediction bias and thereby enhance overall performance. 

The conventional approach to introducing diversity within deep ensembles involves employing distinct random initializations for each ensemble member \citep{lakshminarayanan2016simple}. As a result, these ensemble members converge towards different modes of the loss landscape, each corresponding to a unique predictive function. This baseline technique is quite difficult to surpass. Nevertheless, numerous efforts have been made to further improve deep ensembles by explicitly encouraging diversity in their predictions \citep{rame2021dice,yashima2022feature,masegosa2019learning,pagliardini2022agree}.

These approaches typically encounter several challenges, which can be summarized as follows: \textit{The improvements in test metrics tend to be modest, while the associated extra costs are substantial.} Firstly, diversity-promoting algorithms often involve considerably more intricate implementation details compared to randomized initializations. Secondly, the computational and memory demands of existing methods exceed those of the baseline by a significant margin. Additionally, some approaches necessitate extensive hyperparameter tuning, further compounding computational costs.

In light of these considerations, we introduce $\nu$-ensembles, an algorithm designed to improve deep ensemble calibration and diversity with minimal deviations from the standard deep ensemble workflow. Moreover, our algorithm maintains the same computational and memory requirements as standard deep ensembles, resulting in linear increases in computational costs with the size of the unlabeled dataset.

\textbf{Our contributions}
\begin{itemize}
\item Given an ensemble of size $K$ and an unlabeled set, we propose an algorithm that generates for each unlabeled data point $K$ random labels without replacement and assigns from these a single random label to each ensemble member. For each ensemble member we then simply fit the training data (with its true labels) as well as the unlabeled data (with the generated random labels). See Figure~\ref{explanatory_fig}.\\
\item We provide a PAC-Bayesian analysis of the test performance of the trained ensemble in terms of negative log-likelihood and diversity. On average, the final ensemble is guaranteed to be diverse, accurate, and well-calibrated on test data.\\
\item We provide experiments for the in-distribution setting that demonstrate that for small to medium-sized training sets, $\nu$-ensembles are better calibrated than standard ensembles in the most common calibration metrics.\\
\item We also provide detailed experiments in the out-of-distribution setting and demonstrate that $\nu$-ensembles remain significantly better calibrated than standard ensembles for a range of common distribution shifts.
\end{itemize}

\begin{figure}[t!]
\centering
   \centering
  \includegraphics[width=\textwidth]{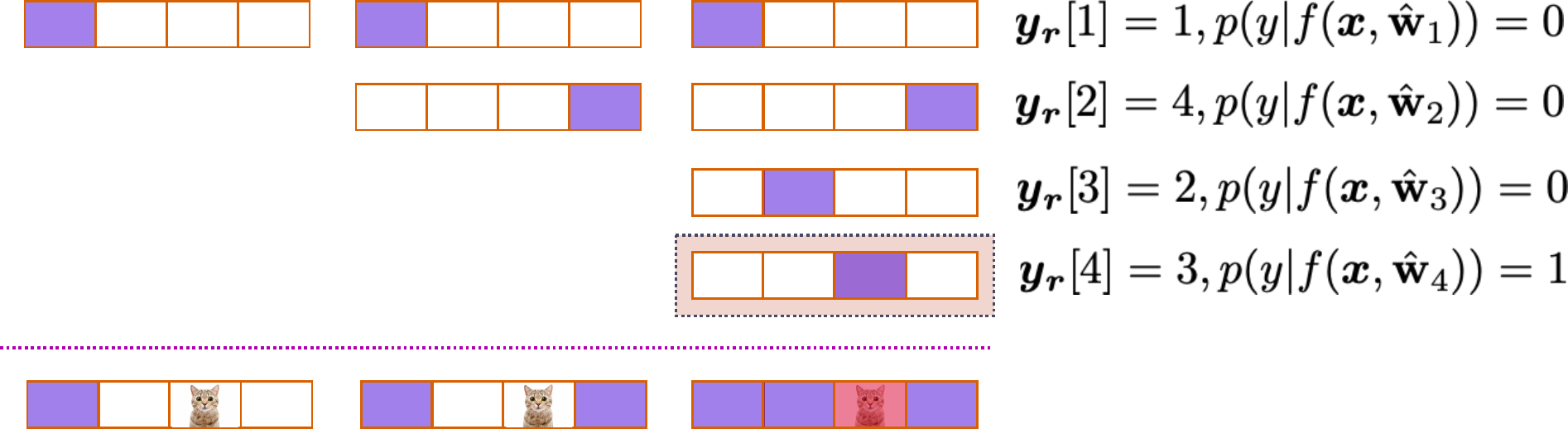}
\caption{\textbf{Motivating $\nu$-ensembles.} Consider a 4-class classification problem and an unlabeled sample $\bx$ with true label $y$. We sample $K=4$ labels without replacement $\boldsymbol{y_r}=[1,4,2,3]$ and fit them perfectly with ensemble members $\{\hat{\bw}_1,\hat{\bw}_2,\hat{\bw}_3,\hat{\bw}_4\}$. As we have sampled exhaustively all classes for this classification problem, exactly one of the sampled labels will be the correct one. The corresponding ensemble member $\hat{\bw}_4$ will learn a useful feature from the input label pair $(\bx,y)$. Noting that $p(y \vert \bx, \hat{\bw}_i)$ \emph{is with respect to the true label} $y$, $p(y \vert \bx, \hat{\bw}_1)=0,p(y \vert \bx, \hat{\bw}_2)=0,p(y \vert \bx, \hat{\bw}_3)=0,p(y \vert \bx, \hat{\bw}_4)=1$ and the empirical variance will be $\hat{\bV}(\hat{\rho})= \frac{1}{2} \left[\frac{1}{K}\sum_j \left[(p(y \vert \bx, \bw_j)-\frac{1}{K}\sum_i\left(p(y \vert \bx, \bw_i)\right) )^2 \right]  \right] = \frac{1}{2}\frac{4-1}{4}\cdot \frac{1}{4}=\frac{1}{2}\cdot\frac{3}{16}$ as computed in Proposition \ref{main_proposition}.} 
  \label{explanatory_fig}
\end{figure}

\section{Small to medium-sized training set setting}
In the laboratory setting, deep learning models are typically trained and evaluated using large highly curated, and labeled datasets. However, real-world settings usually differ significantly. Labeled datasets are often small as the acquisition and labeling of new data is expensive, time-consuming, or simply not feasible. 
A small labeled training set is also often accompanied by a larger unlabeled set. 

The small data regime has been explored in a number of works \citep{ratner2017learning,balestriero2022data,zoph2020learning,sorscher2022beyond,bornschein2020small,cubuk2020randaugment,fabian2021data,zhao2019data,foong2021tight,perez2021progress}, both theoretical and practical. Two of the most common approaches for dealing with few training data, are using an ensemble of predictors, and/or using data augmentation to artificially create a larger training set.

We test our proposed $\nu$-ensembles for a range of training set sizes, while applying data augmentation, and have found that we get performance gains for small to medium-sized training sets (1K - 10K samples). We emphasize that the ``small data" regime is relative; more complex distributions require more data. As such $\nu$-ensembles can be effective beyond these thresholds.

\section{Related work on improvements of deep ensembles}
A number of approaches have been proposed to improve upon standard deep ensembles.

\textbf{Diversity promoting objectives.} 
\cite{rame2021dice} propose to use a discriminator that forces the latent representations of each ensemble member just before the final classification layer to be diverse. They show consistent improvements for large-scale settings in terms of test accuracy and other metrics, however, their approach requires very extensive hyperparameter tuning. \cite{yashima2022feature} encourage the latent representations just before the classification layer to be diverse by leveraging Stein Variational Gradient Descent (SVGD). They show improvements in robustness to non-adversarial noise. However, they do not show improvements over \cite{rame2021dice} in other metrics. 

\citet{masegosa2019learning,ortega2022diversity} propose optimizing a second-order PAC-Bayes bound to enforce diversity. In practice, this means estimating the mean likelihood of a true label across different ensemble members and ``pushing'' the different members to estimate a different value for their own likelihood. The authors show improvements for small-scale experiments, however, this comes at the cost of two gradient evaluations per data sample at each optimization iteration. The method closest to our approach is the very recently proposed Agree to Disagree algorithm \citep{pagliardini2022agree}. Agree to disagree forces ensemble members to disagree with the other members on unlabeled data. Crucially, however, (and in contrast to our approach) the ensemble is constructed greedily, where a single new member is added at a time and is forced to disagree with the previous ones. The method is also evaluated only in the OOD setting.

The above methods exhibit all the shortcomings we previously described, where the cost of implementation, tuning and training cannot easily be justified: 1) the implementation differs significantly from standard ensembles \citep{rame2021dice,yashima2022feature,masegosa2019learning,pagliardini2022agree}; 2) the computational complexity increases significantly  \citep{rame2021dice,pagliardini2022agree}; 3) and the algorithm requires extensive hyperparameter tuning \citep{rame2021dice}.

\textbf{Bayesian approaches.} 
One can also approach ensembles as performing approximate Bayesian inference \citep{wilson2020bayesian}. Under this view, a number of approaches that perform approximate Bayesian inference can also be seen as constructing a deep ensemble \citep{izmailov2021bayesian,wenzel2020good,zhang2019cyclical,immer2021improving,daxberger2021laplace}. The samples from the approximate posterior that form the ensemble can be sampled locally around a single mode using the Laplace approximation \citep{immer2021improving,daxberger2021laplace} or from multiple modes using MCMC \citep{izmailov2021bayesian,wenzel2020good,zhang2019cyclical}. While some approaches resort to stochastic MCMC approaches for computational efficiency \citep{wenzel2020good,zhang2019cyclical}, the authors of \cite{izmailov2021bayesian} apply full-batch Hamiltonian Monte Carlo which is considered the gold standard in approximate Bayesian inference. \cite{d2021repulsive} propose a repulsive approach in terms of the neural network weights. They show that the resulting ensemble can be seen as Bayesian, however, they do not demonstrate consistent improvements across experimental setups. 

One would hope that the regularizing effect of the Bayesian inference procedure would improve the resulting ensembles. Unfortunately, approximate Bayesian inference approaches are typically outperformed by \emph{standard} deep ensembles \citep{ashukha2019pitfalls}. In particular, to achieve the same misclassification or negative log-likelihood error, MCMC approaches typically require many more ensemble members than standard ensembles. 


\textbf{Complementary works.} 
Some works on diverse ensembles are compatible with our approach and can be used in conjunction with it.

\cite{wenzel2020hyperparameter} propose to induce diversity by training on different random initializations as well as different choices of hyperparameters such as the learning rate and the dropout rates in different layers. Ensemble members can be trained independently, and the approach results in consistent gains over standard ensembles. 
As we also train each ensemble member independently we could use hyperparameter ensembling to improve diversity. \cite{jain2022combining} propose to create different training sets for each ensemble member using image transformations (for example edge detection filters) to bias different ensemble members towards different features. In a similar vein, \cite{loh2023multi} encourage different ensemble members to be invariant or equivariant to different data transformations. These approaches can also be used in conjunction with our method to further increase diversity.

\textbf{Self-training.} \cite{jain2022combining} propose to pseudo-label unlabeled data using deep ensembles trained on labeled data. These pseudo-labeled data are then used to retrain the ensemble. This approach (known as self-training, see  \citealp{lee2013pseudo}) can improve significantly standard ensembles. We note however that it is complicated to implement and costly. First, unlabeled data have to be labeled in multiple rounds, a fraction at a time. Also, to be fully effective, ensembles have to be ``distilled" into a final single network. Finally, care has to be taken that ensemble members capture diverse features. By contrast, our method requires a single random labeling of unlabeled data, followed by standard training and introduces a single hyperparameter that is easy to tune.

\section{Diversity through unlabeled data}
We now introduce some notation and then make precise our notions of train and test performance, as well as diversity.

We denote the learning sample $(X,Y)=\{(\bx_i,y_i)\}^n_{i=1}\in(\mathcal{X}\times\mathcal{Y})^n$, that contains $n$ input-output pairs, and use the generic notation $Z$ for an input-output pair $(X,Y)$. Observations $(X,Y)$ are assumed to be sampled randomly from a distribution $\mathcal{D}$. Thus, we denote $(X,Y)\sim\mathcal{D}^n$ the i.i.d observation of $n$ elements. We consider loss functions $\ell:\mathcal{F}\times\mathcal{X}\times\mathcal{Y}\rightarrow\mathbb{R}$, where $\mathcal{F}$ is a set of predictors $f:\mathcal{X}\rightarrow\mathcal{Y}$. We also denote the empirical risk $\hat{\mathcal{L}}^{\ell}_{X,Y}(f)=(1/n)\sum_i\ell(f,\bx_i,y_i)$. 
We denote $\ell_{\text{nll}}(f,\bx,y)=-\log(p(y|\bx,f))$ the negative log-likelihood, where we assume that the outputs of $f$ are normalized to form a probability distribution, and $p(y|\bx,f)$ the probability of label $y$ given $\bx$ and $f$.

Now let us assume that $f$ is a deep neural network architecture, and $\hat{\rho}(\bw)=\frac{1}{K}\sum_i\delta(\bw=\hat{\bw}_i)$ is a set of minima that form a deep ensemble. We are typically interested in minimizing $\bE_{(y,\bx)\sim\mathcal{D}}\left[- \ln \frac{1}{K}\sum_i \left[ p(y \vert \bx, f(\bx;\hat{\bw}_i))\right] \right]$, the loss over new samples drawn from $\mathcal{D}$ for the ensemble predictor, that is: a predictor where we average the probabilities estimated per class by each ensemble member $\frac{1}{K}\sum_i  p(y \vert \bx, f(\bx;\hat{\bw}_i))$. The standard deep ensemble algorithm then simply minimizes $\forall i, \min_{\bw_i}\hat{\mathcal{L}}^{\ell_{\mathrm{nll}}}_{Z}(f(\bx;\hat{\bw}_i))$ for some training set $Z$.

Let us now assume that we have access not only to a training set $Z$ but also to an unlabeled set $U$ of size $m$. We can then present a PAC-Bayes bound\footnote{Variants of this bound have appeared in recent works for majority vote classifiers \citep{thiemann2017strongly,wu2022split,masegosa2020second,masegosa2019learning}. However, to the best of our knowledge, this particular version is novel in the deep ensemble case.} that links the loss on new test data to the loss on the training data as well as the diversity of the ensemble predictions on the unlabeled data.
\begin{theorem}
With high probability over the training set $Z$ and the unlabeled set $U$ drawn from $\mathcal{D}$, for an ensemble $\hat{\rho}(\bw)=\frac{1}{K}\sum_i\delta(\bw=\hat{\bw}_i)$ on $\mathcal{F}$ and all $\gamma \in (0,2)$ simultaneously
\begin{multline}
\bE_{(y,\bx)\sim\mathcal{D}}\left[- \ln \frac{1}{K}\sum_i \left[ p(y \vert \bx, f(\bx;\hat{\bw}_i))\right] \right]\label{main_theorem_eq1}\\ 
\leq\frac{1}{K}\sum_i \left[\hat{\mathcal{L}}^{\ell_{\mathrm{nll}}}_{Z}(f(\bx;\hat{\bw}_i)) \right]-\left(1-\frac{\gamma}{2}\right)\hat{\bV}(\hat{\rho})+\frac{1}{K}\sum_i h\left(\Vert \hat{\bw}_i \Vert^2_2 \right),
\end{multline}
where
\begin{equation}\label{main_theorem_eq2}
\hat{\bV}(\hat{\rho}) = \frac{1}{2m}\sum_U\left[\frac{1}{K}\sum_j \left[\left(p(y \vert \bx, f(\bx,\hat{\bw}_j))-\frac{1}{K}\sum_i p(y \vert \bx, f(\bx,\hat{\bw}_i)) \right)^2 \right]  \right]\\
\end{equation}
is the empirical variance of the ensemble, and $h:\mathbb{R}^+ \rightarrow \mathbb{R}^+$ is a strictly increasing function.
\end{theorem}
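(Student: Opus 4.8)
The plan is to combine a purely analytic ``second-order Jensen'' step with two PAC-Bayes inequalities: one controlling the Gibbs risk on the labelled set $Z$, and one controlling the ensemble's variance on the unlabelled set $U$. Throughout write $p_i=p(y\vert\bx,f(\bx;\hat\bw_i))\in(0,1]$ and $\bar p=\frac1K\sum_i p_i$, and set $g(t)=-\ln t$, so that $g''(t)=1/t^2\geq 1$ for $t\in(0,1]$; hence $g$ is $1$-strongly convex there, and the strong-convexity form of Jensen's inequality (i.e.\ $g(\bar p)\leq\frac1K\sum_i g(p_i)-\frac12\,\mathrm{Var}$) gives, pointwise in $(\bx,y)$,
\[
-\ln\bar p\ \leq\ \frac1K\sum_i(-\ln p_i)-\frac12\cdot\frac1K\sum_i(p_i-\bar p)^2 .
\]
Taking $\E_{(\bx,y)\sim\mathcal D}$ turns the left side into the target quantity, the first right-hand term into the population Gibbs risk $\frac1K\sum_i\E_{(\bx,y)\sim\mathcal D}[-\ln p_i]$, and the second into the population counterpart of $\hat{\bV}(\hat\rho)$. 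It then suffices to (i) upper-bound the Gibbs risk by its empirical value $\frac1K\sum_i\hat{\mathcal L}^{\ell_{\mathrm{nll}}}_{Z}(f(\bx;\hat\bw_i))$, and (ii) lower-bound the population variance by $\hat{\bV}(\hat\rho)$ evaluated on $U$.

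For step (i) I would invoke a standard PAC-Bayes bound (McAllester- or $\lambda$-type) with centred Gaussian prior $\pi=\mathcal N(\vzero,\sigma^2\bI)$ and posterior $\tilde\rho=\frac1K\sum_i\mathcal N(\hat\bw_i,\sigma^2\bI)$; since $\mathrm{KL}\big(\mathcal N(\hat\bw_i,\sigma^2\bI)\,\Vert\,\pi\big)=\Vert\hat\bw_i\Vert^2_2/(2\sigma^2)$ and $\mathrm{KL}$ is convex in its first argument, the complexity term is bounded by $\frac1K\sum_i$ of a strictly increasing function of $\Vert\hat\bw_i\Vert^2_2$ — this is the origin of the $h(\cdot)$ term. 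For step (ii) I would rewrite the per-point variance as a pairwise quantity, $\frac1K\sum_i(p_i-\bar p)^2=\E_{(j,j')\sim\hat\rho^{\,2}}\big[\frac12(p_j-p_{j'})^2\big]$, so that $\frac12\E_{\mathcal D}$ of the variance is the risk of a $[0,\tfrac14]$-valued ``pair loss'' under the product posterior $\hat\rho^{\,2}$, and then apply the \emph{lower-bound} version of the PAC-Bayes-$\lambda$ inequality on $U$: with high probability, for all $\gamma\in(0,2)$ simultaneously,
\[
\E_{\mathcal D}\!\left[\frac{1}{2K}\sum_i(p_i-\bar p)^2\right]\ \geq\ \Big(1-\frac{\gamma}{2}\Big)\hat{\bV}(\hat\rho)-\frac{\mathrm{KL}(\hat\rho^{\,2}\Vert\pi^{2})+\ln(2\sqrt m/\delta)}{\gamma\,m},
\]
where again the Gaussian smoothing is used to make $\mathrm{KL}(\tilde\rho^{\,2}\Vert\pi^{2})=2\,\mathrm{KL}(\tilde\rho\Vert\pi)\leq\frac1K\sum_i\Vert\hat\bw_i\Vert^2_2/\sigma^2$ finite.

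Substituting (i) and (ii) into the second-order Jensen inequality and folding every $\Vert\hat\bw_i\Vert^2_2$-dependent term together with the additive constants (in $n,m,\delta,\gamma,\sigma$) into a single strictly increasing $h:\mathbb R^+\to\mathbb R^+$ yields \eqref{main_theorem_eq1}; a union bound over the two PAC-Bayes events, each of probability $\geq 1-\delta/2$, gives the high-probability statement, and since the $\lambda$-bound in (ii) is a deterministic consequence of a single $\mathrm{kl}$-inequality it holds uniformly in $\gamma$, hence ``for all $\gamma\in(0,2)$ simultaneously''. (Note that $h$ is allowed to depend on $n,m,\delta,\gamma,\sigma$; it only needs to be increasing in its argument, which a term of the form $at+b\sqrt{at+c}+d$ with positive $a,b,c,d$ is.)

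The two places where the genuine work lies are: first, $\ell_{\mathrm{nll}}$ is unbounded, so the PAC-Bayes step in (i) is not off-the-shelf — one must either assume the predicted probabilities are bounded below, equivalently $\ell_{\mathrm{nll}}\leq B$ (which holds for a soft-max head with bounded logits, or under label smoothing), or invoke a PAC-Bayes bound for sub-gamma/unbounded losses, with the extra constants absorbed into $h$; and second, the statement is in terms of the point estimates $\hat\bw_i$, whereas PAC-Bayes naturally produces the smoothed posterior $\tilde\rho$, so one needs a derandomisation step controlling the change in $\hat{\mathcal L}^{\ell_{\mathrm{nll}}}_{Z}$ and in $\hat{\bV}$ when each $\hat\bw_i$ is perturbed by $\mathcal N(\vzero,\sigma^2\bI)$ (through smoothness of $f$ in its weights, or by optimising $\sigma$), with that perturbation error again swept into $h$. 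The strong-convexity step and the two PAC-Bayes applications are otherwise routine; the slightly delicate bookkeeping is making the coefficient $1-\gamma/2$ on $\hat{\bV}(\hat\rho)$ emerge exactly from the reverse $\lambda$-bound while every leftover term stays either increasing in $\Vert\hat\bw_i\Vert^2_2$ or a harmless additive constant.
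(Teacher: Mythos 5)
Your architecture matches the paper's almost exactly: a second-order Jensen step to split the ensemble NLL into a Gibbs risk minus a variance, an (Alquier-type) PAC-Bayes upper bound on the Gibbs NLL over $Z$, the reverse PAC-Bayes-$\lambda$ bound of \citet{thiemann2017strongly} applied to the variance rewritten as a pair loss under the product posterior over $U$ (this is exactly where the $1-\gamma/2$ coefficient comes from in the paper too), and a union bound. Your derivation of the first step via $1$-strong convexity of $-\ln$ on $(0,1]$ is a clean, self-contained substitute for the paper's citation of Theorem~2 of \citet{masegosa2019learning} together with the bounded-likelihood assumption $\max_{y,\bw}p(y\vert\bx;\bw)\leq 1$; the two are equivalent here. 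The one place you genuinely diverge is the complexity term: you smooth the ensemble with Gaussians to get $\mathrm{KL}=\Vert\hat{\bw}_i\Vert_2^2/(2\sigma^2)$ and then owe a derandomisation step (which you correctly flag as nontrivial and do not carry out), whereas the paper avoids derandomisation entirely by taking the prior to be a discrete, finite-precision ``Gaussian'' $\pi_F(\bw)=\sum_{\bw'}\frac{1}{A}e^{-\Vert\bw'\Vert_2^2}\delta(\bw=\bw')$ so that the point-mass ensemble posterior has finite KL directly, at the (acknowledged) cost of working over a finite-precision weight space. Similarly, for the unbounded NLL you propose boundedness or sub-gamma assumptions, while the paper uses Alquier's bound and simply absorbs the uncontrolled moment term $\psi_{\pi,\mathcal{D}}(\gamma,n)$ into $h$ — neither treatment is fully rigorous without further assumptions, so your flagged caveat is apt rather than a defect relative to the paper. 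Net: same proof skeleton, with the paper's two technical devices (discrete prior, Alquier moment term) trading your derandomisation and boundedness gaps for assumptions of their own.
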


The term $\frac{1}{K}\sum_i \left[\hat{\mathcal{L}}^{\ell_{\mathrm{nll}}}_{Z}(f(\bx;\hat{\bw}_i)) \right]$ is simply the average negative log-likelihood of all the ensemble members on the training set $Z$. The term $\hat{\bV}(\hat{\rho})$ captures our notion of diversity for the deep ensemble. Specifically, given a sample $(\bx, y)$ it is the empirical variance of the likelihood $p(y|\bx,f)$ of the correct class $y$ over all the ensemble members. The terms $h\left(\Vert \hat{\bw}_i \Vert^2_2 \right)$ capture a notion of complexity of the deep ensemble. If this term is too large, then it is possible that the ensemble has memorized the training and unlabeled sets leading to poor generalization on new data. From the above, we see that for a deep ensemble to generalize well to new data one needs to minimize its average training error, while maximizing its variance. 

One could attempt to optimize the RHS of \eqref{main_theorem_eq1} directly by setting $U=Z$, through gradient descent. However, this introduces unnecessary complexity to the optimization objective, necessitates that all ensemble members are trained jointly, and also neglects potentially useful unlabeled data. We thus crucially evaluate the variance on a new unlabeled set $U$ and not the training set $Z$. However, a careful reader would note that it is no longer possible to apply gradient descent directly to \eqref{main_theorem_eq1} as $\hat{\bV}(\hat{\rho})$ depends on the unknown true label $y$. \emph{We thus show in the following proposition that it is actually not necessary to know the true label $y$}. For each unlabeled sample $\bx$, it simply suffices to draw $K$ labels randomly without replacement and assign each of them to a different member of the deep ensemble. Then for $K=c$ exactly one of these labels will be the correct one. If each ensemble member fits these random labels perfectly then we can compute the variance term analytically for $K\leq c$.

\begin{proposition}\label{main_proposition}
Assume an unlabeled set $U \in \mathcal{D}^m$, $c$ number of classes, and a labeling distribution $\mathcal{R}$ which for each sample $(\bx, \cdot) \in U$ selects $K\leq c$ labels from $[1, \ldots , c]$ randomly without replacement such that $\boldsymbol{y_r} \in [1, \ldots , c]^K$. Let $\mathcal{A}$ be an algorithm that takes $\boldsymbol{y_r}$ as input and generates an ensemble $\hat{\rho}(\bw) = \frac{1}{K}\sum_i \delta(\bw = \hat{\bw}_i) $ such that $\forall i, f(\bx, \hat{\bw}_i)$ perfectly fits $\boldsymbol{y_r}[i]$
\begin{equation}\label{prop_eq1}
\bE_{\hat{\rho}\sim\mathcal{A}}\left[\hat{\bV}(\hat{\rho})\right] = \frac{K-1}{2cK}
\end{equation}
where the randomness is over $\boldsymbol{y_r}$ and we suppress the index for the different unlabeled points.
\end{proposition}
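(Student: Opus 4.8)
The plan is to reduce the statement to a single unlabeled point and then evaluate a small discrete expectation. Since $\hat{\bV}(\hat{\rho})$ is an average $\frac{1}{m}\sum_U$ of per-point terms, and each point of $U$ receives an independent draw of $\boldsymbol{y_r}$ from the same distribution $\mathcal{R}$, linearity of expectation lets me fix one point $\bx$ with true label $y\in[1,\dots,c]$ and compute the expectation of
\[
\tfrac{1}{2}\cdot\tfrac{1}{K}\sum_j\bigl(p_j-\bar p\bigr)^2,\qquad\text{where } p_j := p(y\mid\bx,f(\bx,\hat{\bw}_j))\ \text{and}\ \bar p := \tfrac{1}{K}\sum_i p_i;
\]
since this value does not depend on which point we picked, averaging it over the $m$ points of $U$ returns the same number.

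The key observation is that the hypothesis "$f(\bx,\hat{\bw}_i)$ perfectly fits $\boldsymbol{y_r}[i]$" forces the predictive distribution of member $i$ to be the point mass at $\boldsymbol{y_r}[i]$, so $p_j=\mathbf{1}[\boldsymbol{y_r}[j]=y]\in\{0,1\}$. Let $S:=\sum_j p_j$ count the members that happen to be assigned the true label. Because the $K$ entries of $\boldsymbol{y_r}$ are drawn from $[1,\dots,c]$ without replacement and $K\le c$, the label $y$ appears at most once among them, hence $S\in\{0,1\}$ with $\Pr(S=1)=K/c$ (a fixed class lies in a uniform $K$-subset of $[1,\dots,c]$ with probability $\binom{c-1}{K-1}/\binom{c}{K}=K/c$) and $\Pr(S=0)=1-K/c$.

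Next I would invoke the elementary identity that the population variance of $K$ zero–one numbers with exactly $S$ ones equals
\[
\frac{1}{K}\sum_j\bigl(p_j-\bar p\bigr)^2 \;=\; \frac{S(K-S)}{K^2}.
\]
Conditioning on $S$: this is $0$ when $S=0$ and $\frac{K-1}{K^2}$ when $S=1$. Taking the expectation over $S$ gives $\frac{K}{c}\cdot\frac{K-1}{K^2}=\frac{K-1}{cK}$, and multiplying by the factor $\tfrac12$ in the definition of $\hat{\bV}(\hat{\rho})$ yields $\frac{K-1}{2cK}$. Averaging this identical per-point value over $U$ leaves it unchanged, which proves the claim.

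There is no real obstacle here — the statement is at bottom a one-hot variance computation. The only points needing care are (i) spelling out that "perfectly fits" means the Dirac predictive distribution, so each $p_j$ is exactly an indicator and takes no intermediate values, and (ii) observing that sampling without replacement with $K\le c$ is precisely what collapses $S$ to a Bernoulli$(K/c)$ variable; given these, everything else is arithmetic, and the answer is seen to be consistent with the worked case $K=c=4$ in Figure~\ref{explanatory_fig}, where it gives $\tfrac12\cdot\tfrac{3}{16}$.
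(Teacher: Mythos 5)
Your proof is correct and follows essentially the same route as the paper's: condition on whether the true label $y$ lands among the $K$ sampled labels (probability $K/c$), note that perfect fitting makes each $p_j$ a one-hot indicator, and evaluate the resulting zero--one variance. Your writeup is slightly cleaner in making explicit the $\binom{c-1}{K-1}/\binom{c}{K}=K/c$ computation and the identity $\frac{1}{K}\sum_j(p_j-\bar p)^2=S(K-S)/K^2$, but the substance is identical.
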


\begin{proof} The expectation of the variance term can be simply obtained by separating the cases when $y$ is and is not in the random labels $\boldsymbol{y_r}$ as follows
\begin{equation}
\begin{split}
&\bE_{\hat{\rho}\sim\mathcal{A}}\left[\hat{\bV}(\hat{\rho})\right] = \bE_{\hat{\rho}\sim\mathcal{A}}\left[\frac{1}{2m}\sum_U\left[\frac{1}{K}\sum_j \left[\left(p(y \vert \bx, \bw_j)-\frac{1}{K}\sum_i\left(p(y \vert \bx, \bw_i)\right) \right)^2 \right]  \right]\right]\\
&= \frac{1}{2m}\sum_U\left[\frac{1}{K}\sum_j \left[\left(p(y \vert \bx, \bw_j)-\frac{1}{K}\sum_i\left(p(y \vert \bx, \bw_i)\right) \right)^2 \right] \cdot \int\mathbb{I}\{y \mathrm{\; in \; randomized \; labels}\}dr  \right.\\
&\quad\quad\left.+\frac{1}{K}\sum_j \left[\left(p(y \vert \bx, \bw_j)-\frac{1}{K}\sum_i\left(p(y \vert \bx, \bw_i)\right) \right)^2 \right]\cdot \int\mathbb{I}\{y \mathrm{\; not \; in \; randomized \; labels}\}dr\right]\\
&= \frac{1}{2m}\sum_U\left[\frac{1}{K}\sum_j \left[\left(p(y \vert \bx, \bw_j)-\frac{1}{K}\sum_i\left(p(y \vert \bx, \bw_i)\right) \right)^2 \right] \cdot \int\mathbb{I}\{y \mathrm{\; in \; randomized \; labels}\}dr  \right.\\
&\quad\quad\left.+0\cdot \int\mathbb{I}\{y \mathrm{\; not \; in \; randomized \; labels}\}dr\right]\\
&= \frac{1}{2m}\sum_U\left[\frac{1}{K}\sum_j \left[\left(p(y \vert \bx, \bw_j)-\frac{1}{K}\sum_i\left(p(y \vert \bx, \bw_i)\right) \right)^2 \right] \cdot \frac{K}{c}  \right.\\
&= \frac{1}{2mc}\sum_U\left[\sum_j \left[\left(p(y \vert \bx, \bw_j)-\frac{1}{K} \right)^2 \right]  \right]\\
&= \frac{1}{2mc}\sum_U \left[\left(1-\frac{1}{K} \right)^2+\left(0-\frac{1}{K} \right)^2\cdot (K-1)   \right]\\
&= \frac{1}{2mc}\sum_U \left[ \frac{K-1}{K}  \right]=\frac{K-1}{2cK}.
\end{split}
\end{equation}
\end{proof}

\emph{Thus fitting $\boldsymbol{y_r}\sim \mathcal{R}$ guarantees in expectation through \eqref{prop_eq1} a fixed level of variance, that strictly increases with the size of the ensemble.} Taking the expectation on both sides of \eqref{main_theorem_eq1} we can also derive a high probability bound on $\bE_{\hat{\rho}\sim\mathcal{A}}\bE_{(y,\bx)\sim\mathcal{D}}\left[- \ln \frac{1}{K}\sum_i \left[ p(y \vert \bx, f(\bx;\hat{\bw}_i))\right] \right]$ given multiple samples from $\hat{\rho}\sim\mathcal{A}$, and subject to additional conditions on the training set and complexity terms (namely boundedness). We defer the technical details to the Appendix. 

We thus propose algorithm \ref{alg:main_algorithm} to train $\nu$-ensembles. The proposed algorithm is extremely simple to implement. We simply need to construct $K$ randomly labeled sets $U_i$, such that all the sets $U_i$ contain different labels for all samples. We can then optimize 
\begin{equation}\label{min_objective}
\hat{\mathcal{L}}^{\ell_{\mathrm{nll}}}_{Z}(f(\bx;\hat{\bw}_i))+\beta\hat{\mathcal{L}}^{\ell_{\mathrm{nll}}}_{U_i}(f(\bx;\hat{\bw}_i))+\gamma\Vert\hat{\bw}_i \Vert_2^2 
\end{equation}
with the optimization algorithm of our choice. In the above, $\beta$ is the weight placed on the randomly labeled samples. Notably, doing hyperparameter optimization over $\beta$ allows us to easily detect when $\nu$-ensembles improve upon standard ensembles using a validation set, as for $\beta=0$ we recover standard ensembles. The term $\gamma\Vert\hat{\bw}_i \Vert_2^2$  results from \eqref{main_theorem_eq1}, and coincides we standard weight decay regularization. Crucially we rely on being able to fit random labels. We note that it is well known that deep neural networks can fit random labels perfectly \citep{zhang2021understanding}. 

\begin{algorithm}[!t]
\caption{$\nu$-ensembles}\label{alg:main_algorithm}
\textbf{Input: } Weight of the unlabeled loss $\beta$, $\ell_2$ regularization strength $\gamma$, training data $Z$, unlabeled data $U$, number of ensemble members $K$ \\
\textbf{Output:} Ensemble $\mathcal{E}_K = \{ \hat{\bw}_1, \dots,\hat{\bw}_K \}$ 
\begin{algorithmic}[1]
\For{$i$ in $\{1, \dots, K \}$ }
\State $U_i \leftarrow \{\}$
\For{$\bx$ in $U$ }
\State Sample $y$ randomly without replacement from $[1, \dots, c]$ 
\State $U_i\leftarrow U_i \cup \; (\bx,y)$
\EndFor
\State $\hat{\bw}_i \leftarrow \mathrm{Random \; Initialization}$
\State $\min_{\hat{\bw}_i}\hat{\mathcal{L}}^{\ell_{\mathrm{nll}}}_{Z}(f(\bx;\hat{\bw}_i))+\beta\hat{\mathcal{L}}^{\ell_{\mathrm{nll}}}_{U_i}(f(\bx;\hat{\bw}_i))+\gamma\Vert\hat{\bw}_i \Vert_2^2$ 
\EndFor
\end{algorithmic}
\end{algorithm}

\section{In-distribution and out-of-distribution experiments}
We conducted two main types of experiments, evaluating (i)  whether $\nu$-ensembles improve upon standard ensembles for in-distribution testing data, (ii) whether the gains of $\nu$-ensembles are robust to various distribution shifts.

To approximate the presence of unlabeled data using common classification datasets, given a training set $Z$, we reserve a validation set $Z_{\mathrm{val}}$, and a smaller training set $Z_{\mathrm{train}}$ and use the remaining datapoints as a pool for unlabeled data $U$. We keep the testing data $Z_{\mathrm{test}}$ unchanged. 

\subsection{In-distribution (ID) performance}
To test in-distribution performance, we use the standard CIFAR-10 and CIFAR-100 datasets \citep{krizhevsky2009learning}. We explore a variety of dataset sizes. Specifically, for both datasets, we keep the original testing set such that $\vert Z_{\mathrm{test}} \vert =10000$, and we use $5000$ samples from the training set as unlabeled data $U$ and $5000$ samples as validation data $Z_{\mathrm{val}}$. For training, we use datasets $Z_{\mathrm{train}}$ of size $1000,2000,4000,10000$ and $40000$. We use three types of neural network architectures, a LeNet architecture \cite{lecun1998gradient}, an MLP architecture with 2 hidden layers \cite{goodfellow2016deep}, and a WideResNet22 architecture \cite{zagoruyko2016wide}. For both datasets, we used the standard augmentation setup of random flips + crops. We note that similar training-unlabeled set splits for CIFAR-10 and CIFAR-100 have been explored before in \cite{alayrac2019labels,jain2022combining}. 

We measure testing performance using accuracy as well as calibration on the testing set. Specifically, we measure calibration using the Expected Calibration Error (ECE) \citep{naeini2015obtaining}, the Thresholded Adaptive Calibration Error (TACE) \citep{nixon2019measuring}, the Brier Score Reliability (Brier Rel.) \citep{murphy1973new}, and the Negative Log-Likelihood (NLL). We also measure the diversity of the ensemble on the test set using the average mutual information between ensemble member predictions. More specifically for each ensemble we treat its output as a random variable giving values in $[1, \dots, c]$. We compute the Mutual Information (MI) of this random variable between all ensemble pairs and take the average. Lower MI then corresponds to more diverse ensembles. 

For both datasets, we first create an ensemble with $K=10$ ensemble members and train each ensemble member using AdamW \citep{loshchilov2017decoupled}. For standard ensembles we simply minimize $\hat{\mathcal{L}}^{\ell_{\mathrm{nll}}}_{Z}(f(\bx;\hat{\bw}_i))+\gamma\Vert\hat{\bw}_i \Vert_2^2 $ for each ensemble member using different random initializations. For $\nu$-ensembles we optimize \eqref{min_objective}. For hyperparameter tuning we perform a random search with 50 trials, using Hydra \citep{Yadan2019Hydra}. The details for the hyperparameter tuning ranges can be found in the Appendix. Table \ref{first-table} presents the results for a training set of size 1000.

\begin{table}
  \caption{\textbf{ID performance, 1000 training samples, 10 ensemble members.} $\nu$-ensembles retain approximately the same accuracy as standard ensembles. At the same time, they achieve significantly better calibration in all calibration metrics. The improvements are consistent across all tested architectures and both datasets. We also observe that the Mutual Information (MI) of $\nu$-ensembles is significantly lower than standard ensembles. Thus, $\nu$-ensembles are more diverse than standard ensembles, which explains their improved calibration. These empirical observations are also consistent with our theoretical analysis. Masegosa and Agree to Disagree ensembles typically undefit and have lower testing accuracy than both Standard and $\nu$-ensembles.}
  \label{first-table}
  \centering
  \begin{tabular}{llllllll}
    \toprule
    Dataset / Aug     & Method     & Acc $\uparrow$ & ECE $\downarrow$ & TACE $\downarrow$ & Brier Rel. $\downarrow$ & NLL $\downarrow$ & MI $\downarrow$  \\
    \midrule
    CIFAR-10                 & Standard  & 0.522 & \colorcellece{0.184} & 0.035 & 0.137 & 2.198 & 1.313    \\
    / LeNet                         & Agree Dis.  & 0.432 & \colorcellece{0.251} & 0.05 & 0.168 & 2.25 & 1.552 \\
                             & Masegosa  & 0.492 & \colorcellece{0.103} & 0.024 & 0.073 & 1.454 & 1.179 \\
            & $\boldsymbol{\nu}$\textbf{-ensembles}     & \textbf{0.5141} &   \colorcellece{0.131} &   0.028 &   0.117 &   1.650 &   1.245 \\

    \midrule
    CIFAR-10                 & Standard  & 0.398 & \colorcellece{0.238} & 0.05 & 0.162 & 2.197 & 1.615      \\
    / MLP                         & Agree Dis. & 0.354 & \colorcellece{0.358} & 0.066 & 0.239 & 3.201 & 1.547 \\
                             & Masegosa & 0.383 &\colorcellece{ 0.024} & 0.024 & 0.068 & 1.768 & 1.711 \\
             & $\boldsymbol{\nu}$\textbf{-ensembles}       & \textbf{0.401} &   \colorcellece{0.098} &   0.023 &   0.092 &   1.767 &  1.559 \\

    \midrule
    CIFAR-10                 & Standard  & 0.529 & \colorcellece{0.096} & 0.024 & 0.108 & 1.714 & 0.992    \\
    / ResNet22               & Agree Dis. & 0.478 & \colorcellece{0.051} & 0.02 & 0.087 & 1.633 & 0.706 \\
          & $\boldsymbol{\nu}$\textbf{-ensembles}       & \textbf{0.526} &   \colorcellece{0.010} &   0.017 &   0.086 &   1.449 &   0.691 \\

    \midrule
    \midrule
    CIFAR-100                 & Standard & 0.151 & \colorcellece{0.301} & 0.007 & 0.216 & 9.434 & 2.228 \\
    / LeNet                          & Agree Dis. & 0.113 & \colorcellece{0.229} & 0.007 & 0.156 & 7.568 & 1.628 \\
                              & Masegosa & 0.139 & \colorcellece{0.087} & 0.005 & 0.07 & 4.193 & 2.129 \\  
        & $\boldsymbol{\nu}$\textbf{-ensembles}       & \textbf{0.147} &   \colorcellece{0.155} &   0.006 &   0.113 &   4.846 &   1.654  \\

    \midrule
    CIFAR-100                 & Standard  & 0.102 & \colorcellece{0.253} & 0.007 & 0.16 & 5.926 & 3.093 \\
    / MLP                          & Agree Dis. & 0.093 & \colorcellece{0.359} & 0.008 & 0.243 & 7.247 & 2.881 \\
                              & Masegosa & 0.093 & \colorcellece{0.257} & 0.008 & 0.16 & 6.134 & 3.103 \\
               & $\boldsymbol{\nu}$\textbf{-ensembles}       & \textbf{0.103} &   \colorcellece{0.04} & 0.004 &  0.049 & 4.171 &   2.807 \\

    \midrule
    CIFAR-100                 & Standard  & 0.136 & \colorcellece{0.197} & 0.007 & 0.141 & 7.700 & 1.701 \\
                              & Agree Dis. & 0.132 & \colorcellece{0.172} & 0.007 & 0.124 & 6.831 & 1.708 \\
    / ResNet22            & $\boldsymbol{\nu}$\textbf{-ensembles}       & \textbf{0.134} &   \colorcellece{0.135} & 0.006 & 0.099 & 4.892 & 1.476  \\

    \bottomrule
  \end{tabular}
\end{table}

We see that $\nu$-ensembles have comparable accuracy to standard ensembles but with significantly better calibration across all calibration metrics. We also see that $\nu$-ensembles achieve significantly higher diversity between ensemble members. These results are consistent across all architectures for both CIFAR-10 and CIFAR-100. For the case of CIFAR-10, we see that the testing accuracy is low, however, this is to be expected due to the small size of the training dataset $Z_{\mathrm{train}}$. 

We also compare with Masegosa ensembles \citep{masegosa2019learning} and Agree to Disagree ensembles \citep{pagliardini2022agree} (we also attempted to implement DICE ensembles \citep{rame2021dice} but could not replicate a version that converged consistently, despite correspondence with the authors). We see that both Masegosa and Agree to Disagree ensembles tend to underfit the data and have worse testing accuracy than $\nu$-ensembles. In particular, Agree to Disagree ensembles also have in general worse calibration. Masegosa ensembles on the other hand have somewhat better calibration than $\nu$-ensembles in most cases. 
Our algorithm compares very favorably in terms of time and space complexity with both Masegosa and Agree to Disagree Ensembles. Standard and $\nu$ ensembles have $\mathcal{O}(1)$ memory cost as the ensemble size increases, if ensemble members are trained sequentially. On the other hand, Masegosa and Agree to Disagree ensembles in general scale like $\mathcal{O}(\mathrm{K})$ as all the ensemble members have to be trained jointly. Analyzing the computational cost is more complicated, however in general Masegosa ensembles require approximately $\times 2$ the computational time of Standard ensembles. Agree to Disagree ensembles scale roughly as $\mathcal{O}(\mathrm{K})$ as ensemble members have to be computed one at a time. In Figure \ref{fig:miscalleneous} we compare the computational cost of Standard, $\nu$ and Agree to Disagree Ensembles.

We then explore the effect of increasing the dataset size. We plot the results of varying the training set size in $\{1000,2000,4000,10000,40000\}$ in Figure \ref{varying_training_size}. We observe that $\nu$-ensembles continue achieving the same accuracy as standard ensembles for all training set sizes. At the same time, they retain large improvements in calibration, in terms of the ECE, for small to medium size training sets. For larger training sets the improvements gradually decrease. Notably, there are differences between the easier CIFAR-10 and the more difficult CIFAR-100 dataset. Our calibration gains are significantly larger for the more difficult CIFAR-100 dataset. Furthermore, we retain these gains for larger training set sizes. In particular, we observe improvements for the ResNet22 architecture and 10000 training samples, while this is not the case for CIFAR-10.

\begin{figure*}[t!]
\centering
\begin{subfigure}{\textwidth}
   \centering
  \includegraphics[width=\textwidth]{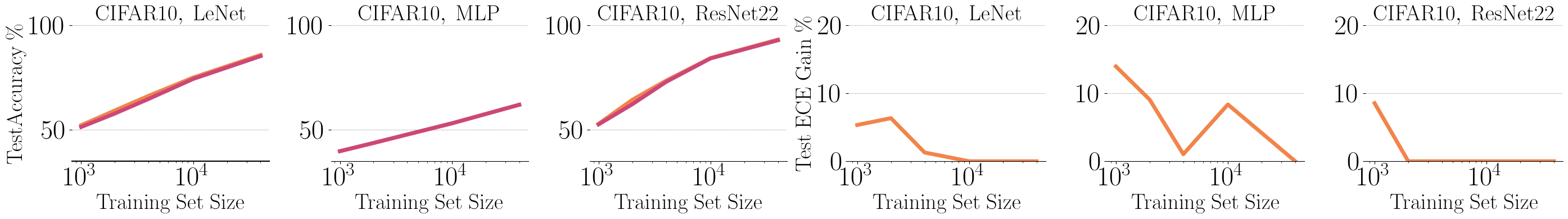}
  \caption{CIFAR-10}
\end{subfigure}%

\begin{subfigure}{\textwidth}
   \centering
  \includegraphics[width=\textwidth]{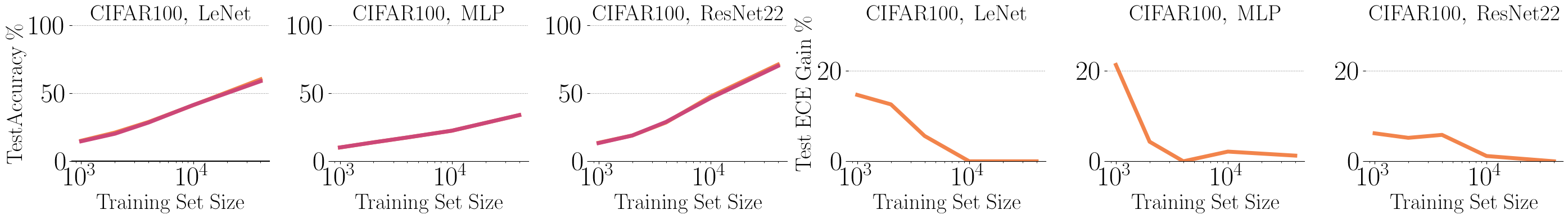}
  \caption{CIFAR-100}
\end{subfigure}%

\caption{\textbf{Varying the size of the training set.} For both standard and $\nu$-ensembles, we vary the size of the training set $Z_{\mathrm{train}}$ to take values in $\{1000,2000,4000,10000,40000\}$. $\nu$-ensembles have the same test accuracy as standard ensembles for all training set sizes. We also report the improvement in Expected Calibration Error (ECE) compared to standard ensembles. We see that, as the training size increases, the improvements decrease. Notably, we obtained larger improvements for the more difficult CIFAR-100 dataset than for the easier CIFAR-10 dataset. Also, we continue to have improvements for larger training set sizes. In particular, we observe improvements for the ResNet22 architecture at 10000 training samples while this is not the case for CIFAR-10.} 
  \label{varying_training_size}
\end{figure*}

\subsection{Out-of-distribution (OOD) generalization}
We evaluated $\nu$-ensembles and standard ensembles on difficult out-of-distribution tasks for the CIFAR-10 dataset, for the case of 1000 training samples. Specifically, we followed the approach introduced in \cite{hendrycks2018benchmarking} which proposed to evaluate the robustness of image classification algorithms to 15 common corruption types. We apply the corruption in 5 levels of increasing severity and evaluate the average test accuracy and calibration in terms of ECE across all corruption types. We plot the results in Figure \ref{varying_robustness_to_corruptions}. We observe that $\nu$-ensembles retain the same testing accuracy as standard ensembles. At the same time, they are significantly better calibrated in terms of the Expected Calibration Error. This holds for all tested architectures and for all corruption levels. We note that in the ResNet22 case, we see that $\nu$-ensembles are particularly useful for high-intensity corruptions (the improvement in ECE increases from 10\% to 15\%).

\begin{figure*}[t!]
\centering
\begin{subfigure}{\textwidth}
   \centering
  \includegraphics[width=\textwidth]{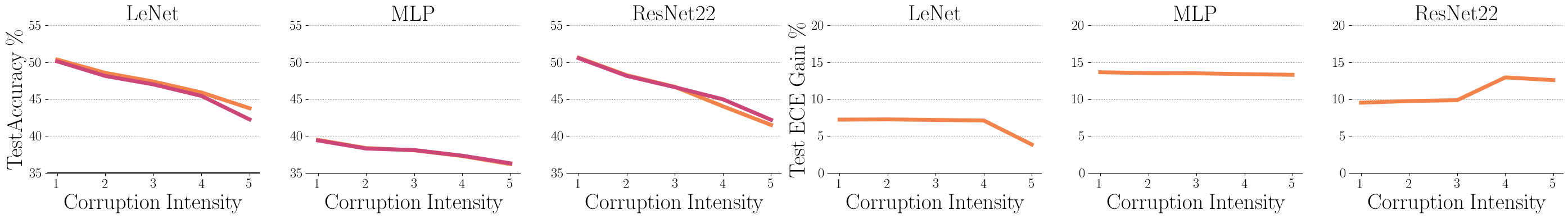}
\end{subfigure}%

\caption{\textbf{CIFAR-10 robustness to common corruptions.} We apply 15 common image corruptions to the CIFAR-10 testing dataset for 5 levels of increasing intensity. For each intensity level, we then estimate the average testing accuracy and ECE across all corruption types, for both the standard ensemble and the $\nu$-ensemble. We observe that the $\nu$-ensemble retains approximately the same testing accuracy as the standard ensemble for all corruption levels. At the same time, the $\nu$-ensemble is significantly better calibrated than the standard ensemble.} 
  \label{varying_robustness_to_corruptions}
\end{figure*}

\begin{figure*}[t!]
\centering
   \centering
  \includegraphics[width=1\textwidth]{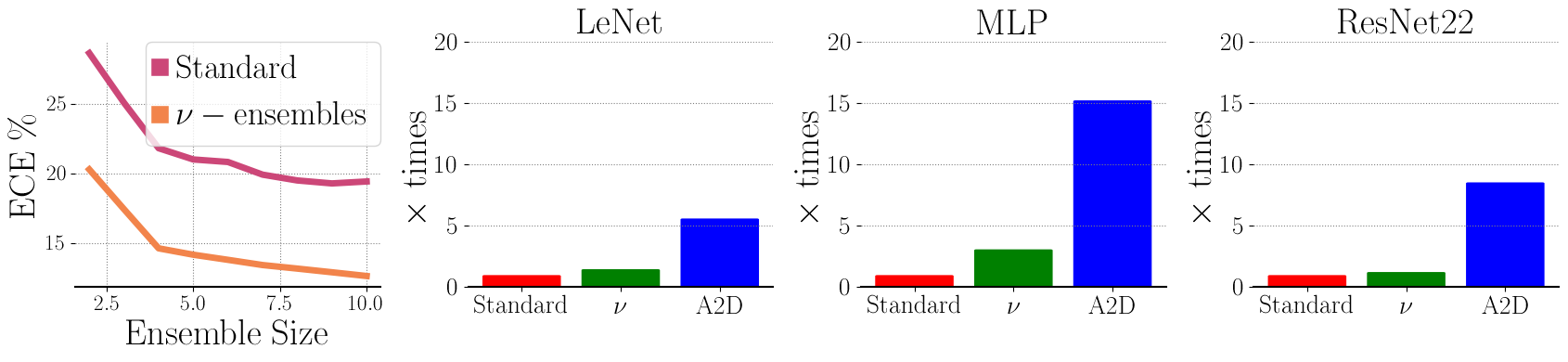}
\caption{\textbf{$\nu$-ensembles and other methods.} Left: Improvements in ECE plateau around 8 ensemble members for Standard ensembles, but continue improving for $\nu$-ensembles. Other figures: we compare the training time of Standard, $\nu$ and Agree to Disagree ensembles, for the CIFAR-10 dataset with 1000 training samples and 5000 unlabeled samples. We plot $\mathrm{(total \; training \; time) / (epochs * ensemble \; size)}$. Not only do Agree to Disagree ensembles have to be trained sequentially but the computational complexity for each member is significantly larger.} 
  \label{fig:miscalleneous}
\end{figure*}

\section{Limitations}

In our experiments, $\nu$-ensembles demonstrate enhanced calibration performance when applied to standard ensembles, particularly in low to medium-data scenarios. However, in the context of a large data regime, we did not observe any notable improvements. Attempting to force the ensemble to learn random labels in such cases actually had a detrimental effect on calibration. This complex behaviour warrants a more nuanced theoretical analysis. 
The ability to predict in advance the specific training and unlabeled dataset sizes that would benefit from $\nu$-ensembles would be a valuable asset. Additionally, it is worth noting that despite observing significant enhancements in calibration, counterintuitively we did not observe corresponding improvements in accuracy. 

\section{Conclusion}

Deep ensembles have established themselves as a very strong baseline that is challenging to surpass. Not only do they consistently yield improvements across diverse settings, but they also do so with a very simple and efficient algorithm. Consequently, any algorithms aiming to enhance deep ensembles should prioritize efficiency and conceptual simplicity to ensure widespread adoption. 
In this work, we introduced $\nu$-ensembles, a novel deep ensemble algorithm that achieves both goals. When presented with an unlabeled dataset, $\nu$-ensembles generate distinct labelings for each ensemble member and subsequently fit both the training data and the randomly labeled data. 
Future directions of research include exploring the potential for $\nu$-ensembles to outperform standard ensembles in the context of large datasets. 

\clearpage

\appendix

\section{Proofs}

\subsection{Proof of Theorem 1}

\begin{theorem}
(Theorem 2, \cite{masegosa2019learning}) For any distribution $\hat{\rho}$ on $\mathcal{F}$
\begin{equation}\label{masegosa_variancev2}
    \bE_{(y,\bx)\sim\mathcal{D}}\left[- \ln \bE_{\bw \sim \hat{\rho}} \left[ p(y \vert \bx, f(\bx;\bw))\right] \right] \leq \bE_{\bw \sim \hat{\rho}} \left[\mathcal{L}^{\ell_{\mathrm{nll}}}_{(y,\bx) \sim \mathcal{D}}(f(\bx;\bw)) \right]-\bV(\hat{\rho})
\end{equation}
where $\bV(\hat{\rho})$ is a variance term defined as
\begin{equation}
    \bV(\hat{\rho}) = \bE_{(y,\bx)\sim \mathcal{D}}\left[\frac{1}{2 \max_{\bw}p(y \vert \bx;\bw)} \bE_{\bw \sim \hat{\rho}} \left[(p(y \vert \bx, \bw)-\bE_{\bw \sim \hat{\rho}}\left(p(y \vert \bx, \bw)\right) )^2 \right]  \right].
\end{equation}
\end{theorem}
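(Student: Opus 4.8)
The plan is to reduce the claim to a pointwise (in $(y,\bx)$) statement and then establish a second-order Jensen gap for the convex map $g(t)=-\ln t$. With $\bE_{(y,\bx)\sim\mathcal{D}}$ on the outside of every term, and using Fubini to rewrite the average-risk term as $\bE_{\bw\sim\hat\rho}[\mathcal{L}^{\ell_{\mathrm{nll}}}_{(y,\bx)\sim\mathcal D}(f(\bx;\bw))]=\bE_{(y,\bx)}\bE_{\bw\sim\hat\rho}[-\ln p(y|\bx,\bw)]$, it suffices to prove for each fixed $(y,\bx)$ that
$$-\ln \bE_{\bw\sim\hat\rho}[p(y|\bx,\bw)] \le \bE_{\bw\sim\hat\rho}[-\ln p(y|\bx,\bw)] - \frac{1}{2\max_{\bw}p(y|\bx;\bw)}\,\bE_{\bw\sim\hat\rho}\big[(p(y|\bx,\bw)-\bE_{\bw\sim\hat\rho}p(y|\bx,\bw))^2\big],$$
since applying $\bE_{(y,\bx)}$ to this reproduces exactly the three terms of the theorem.

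For the pointwise argument I fix $(y,\bx)$ and abbreviate the random variable $p=p(y|\bx,\bw)$ with $\bw\sim\hat\rho$, its mean $\mu=\bE_{\bw}[p]$, and $M=\max_{\bw}p(y|\bx;\bw)$, the maximum over the support of $\hat\rho$. Every realization of $p$, and hence $\mu$, lies in $(0,M]$ (positivity is guaranteed since the outputs are softmax-normalized). On this interval $g(t)=-\ln t$ satisfies $g''(t)=t^{-2}\ge M^{-2}$. I would then invoke the tangent-parabola lower bound: setting $h(t)=g(t)-g(\mu)-g'(\mu)(t-\mu)-\tfrac{1}{2M^2}(t-\mu)^2$, one has $h(\mu)=h'(\mu)=0$ and $h''(t)=t^{-2}-M^{-2}\ge 0$, so $h$ is convex with an interior stationary point at $\mu$, forcing $h\ge 0$, i.e.
$$-\ln t \ge -\ln\mu - \tfrac{1}{\mu}(t-\mu) + \tfrac{1}{2M^2}(t-\mu)^2 \quad\text{for all } t\in(0,M].$$
Because $p$ is a probability we have $M\le 1$, hence $\tfrac{1}{2M^2}\ge\tfrac{1}{2M}$, so the parabola coefficient may be weakened to $\tfrac{1}{2M}$. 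Substituting $t=p$ and taking $\bE_{\bw}$ annihilates the linear term (as $\bE_{\bw}[p-\mu]=0$), giving $\bE_{\bw}[-\ln p]\ge -\ln\mu + \tfrac{1}{2M}\bE_{\bw}[(p-\mu)^2]$, which rearranges to the required pointwise inequality.

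Assembly is then immediate: taking $\bE_{(y,\bx)\sim\mathcal{D}}$ of the pointwise bound and recognizing the last term as precisely $\bV(\hat\rho)$ completes the proof. The main obstacle is this pointwise second-order Jensen gap, whose nonstandard feature is that the lower bound $g''\ge M^{-2}$ holds only over the support of the likelihoods at the given $(y,\bx)$ rather than uniformly on $(0,1]$ — this is exactly what produces the data-dependent normalizer $\max_{\bw}p(y|\bx;\bw)$ in $\bV$. The one point requiring care is vanishing likelihoods: if $p=0$ occurs with positive $\hat\rho$-mass the right-hand side is $+\infty$ and the bound is vacuous, so one restricts to strictly positive likelihoods, and one checks that the convexity argument for $h\ge 0$ survives as $t\to 0^+$, where $h(t)\to+\infty$, so no boundary difficulty arises.
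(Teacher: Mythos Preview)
The paper does not prove this statement; it quotes it verbatim as Theorem~2 of \cite{masegosa2019learning} and uses it as a black-box input to the proof of its own Theorem~1. So there is no in-paper proof to compare against.

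Your argument is correct and is the natural second-order Jensen route. The tangent-parabola construction with $h''(t)=t^{-2}-M^{-2}\ge 0$ on $(0,M]$ cleanly yields the coefficient $1/(2M^2)$, and your relaxation to $1/(2M)$ via $M\le 1$ is valid. One caveat worth flagging explicitly: that relaxation is precisely where the classification assumption enters. If $p(y\mid\bx,\bw)$ were a density that could exceed~$1$, the step $1/(2M^2)\ge 1/(2M)$ would reverse and the argument would only deliver the (then weaker) $1/(2M^2)$ normalizer. The paper is safe here because it immediately invokes Assumption~\ref{basic_assumptionv2} with $C=1$ (classification), but the theorem as stated does not make that restriction explicit, so your proof is really of the classification instance that the paper actually uses.
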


We need to bound $\bV(\hat{\rho})$ and $\bE_{\bw \sim \hat{\rho}} \left[\mathcal{L}^{\ell_{\mathrm{nll}}}_{(y,\bx) \sim \mathcal{D}}(f(\bx;\bw)) \right]$ using their empirical versions. We will use a labeled training set $Z$ to bound the term $\bE_{\bw \sim \hat{\rho}} \left[\mathcal{L}^{\ell_{\mathrm{nll}}}_{(y,\bx) \sim \mathcal{D}}(f(\bx;\bw)) \right]$ and an unlabeled set $U$ to bound $\bV(\hat{\rho})$. To bound the terms we will use existing PAC-Bayes bounds. The variance term has to be rewritten in the form $\bE_{\bw \sim \hat{\rho}}\bE_{(y,\bx)\sim \mathcal{D}}\left[L(y,\bx,\bw) \right]$ in which PAC-Bayes bounds are directly applicable.

Let us assume as in \cite{masegosa2019learning} that the model likelihood is bounded:
\begin{assumption}\label{basic_assumptionv2} \cite{masegosa2019learning}
There exists a constant $C<\infty$ such that $\forall \bx \in \mathcal{X}$, $\max_{y,\bw} p(y \vert \bx;\bw)\leq C$.
\end{assumption}
Note that this assumption holds for the classification setting with $C=1$. Then the variance can be written as 
\begin{equation}
\begin{split}
    \bV(\hat{\rho}) &= \frac{1}{2} \bE_{(y,\bx)\sim \mathcal{D}}\left[\bE_{\bw \sim \hat{\rho}} \left[(p(y \vert \bx, \bw)-\bE_{\bw \sim \hat{\rho}}\left(p(y \vert \bx, \bw)\right) )^2 \right]  \right]\\
    &=\frac{1}{2} \bE_{(y,\bx)\sim \mathcal{D}}\bE_{\bw \sim \hat{\rho}} \left[p(y \vert \bx, \bw)^2\right]-\frac{1}{2}\bE_{(y,\bx)\sim \mathcal{D}}\left[\bE_{\bw \sim \hat{\rho}}p(y \vert \bx, \bw)\right]^2  \\
    &=\frac{1}{2} \bE_{(y,\bx)\sim \mathcal{D}}\bE_{\bw \sim \hat{\rho}} \left[p(y \vert \bx, \bw)^2\right]-\frac{1}{2}\bE_{(y,\bx)\sim \mathcal{D}}\left[\bE_{\bw \sim \hat{\rho}}p(y \vert \bx, \bw)\bE_{\bw' \sim \hat{\rho}}p(y \vert \bx, \bw')\right]  \\
    &=\frac{1}{2} \bE_{(y,\bx)\sim \mathcal{D}}\bE_{\hat{\rho}(\bw, \bw')} \left[p(y \vert \bx, \bw)^2-p(y \vert \bx, \bw)p(y \vert \bx, \bw')\right]  \\
    &=\frac{1}{2} \bE_{(y,\bx)\sim \mathcal{D}}\bE_{\hat{\rho}(\bw, \bw')} \left[L(y,\bx,\bw, \bw')\right]  \\
\end{split}
\end{equation}
where $L(y,\bx,\bw, \bw')=p(y \vert \bx, \bw)^2-p(y \vert \bx, \bw)p(y \vert \bx, \bw')$ and $\hat{\rho}(\bw, \bw')=\hat{\rho}(\bw)\hat{\rho}(\bw')$.

We can then use the following PAC-Bayes theorem to lower bound $\bV(\hat{\rho})$ through it's empirical estimate, noting that $L(y,\bx,\bw, \bw')\leq1$ which is a requirement for this bound.

\begin{theorem}
(PAC-Bayes-$\lambda$, \citet{thiemann2017strongly}). For any probability distribution $\pi$ on $\mathcal{F}$ that is independent of $U$ and any $\delta_1 \in (0,1)$, with probability at least $1-\delta_1$ over a random draw of a sample $U$, for all distributions $\hat{\rho}$ on $\mathcal{F}$ and all $\gamma \in (0,2)$ simultaneously and a bounded loss $L\leq 1$
\begin{equation}
\bE_{\bw \sim \hat{\rho}}\bE_{(y,\bx)\sim\mathcal{D}}\left[ L(y,\bx,\bw)\right] \geq\left(1-\frac{\gamma}{2}\right)\bE_{\bw \sim \hat{\rho}}\frac{1}{m}\sum_{(y,\bx) \in U}\left[ L(y,\bx,\bw)\right]-\frac{\mathrm{KL}(\hat{\rho}\vert\vert\pi)+\ln(2\sqrt{m}/\delta)}{\gamma m} 
\end{equation}
\end{theorem}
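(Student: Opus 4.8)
The plan is to prove the statement as a standard PAC-Bayes exponential-moment argument, specialized so that the empirical average on $U$ \emph{lower}-bounds the true risk. Write $p(\bw)=\bE_{(y,\bx)\sim\mathcal{D}}[L(y,\bx,\bw)]$ and $\hat{L}_U(\bw)=\frac{1}{m}\sum_{(y,\bx)\in U}L(y,\bx,\bw)$, and fix $\gamma\in(0,2)$ for the moment. First I would apply the Donsker--Varadhan change-of-measure inequality to the function $\phi(\bw)=\gamma m\big[(1-\tfrac{\gamma}{2})\hat{L}_U(\bw)-p(\bw)\big]$, which yields $\bE_{\bw\sim\hat\rho}[\phi(\bw)]\le \mathrm{KL}(\hat\rho\|\pi)+\ln\bE_{\bw\sim\pi}[e^{\phi(\bw)}]$ for every $\hat\rho$ simultaneously. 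Rearranging the left-hand side already produces the target shape $(1-\tfrac{\gamma}{2})\bE_{\hat\rho}\hat{L}_U-\bE_{\hat\rho}p$, so the whole problem reduces to controlling the prior moment $\bE_\pi[e^{\phi}]$.

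To bound that moment I would pass the expectation over $U$ through Markov's inequality: since $\pi$ is independent of $U$, Fubini gives $\bE_U\bE_\pi[e^{\phi}]=\bE_\pi\bE_U[e^{\phi}]$, and Markov ensures that with probability at least $1-\delta_1$ over $U$ one has $\bE_\pi[e^{\phi}]\le \tfrac{1}{\delta_1}\bE_\pi\bE_U[e^{\phi}]$. The heart of the argument is the per-sample bound on $\bE_U[e^{\phi}]$. Because the $L(y,\bx,\bw)$ over $U$ are i.i.d. with mean $p(\bw)$ and lie in $[0,1]$, the moment factorizes as $e^{-\gamma m p(\bw)}\prod_i\bE[e^{aL_i}]$ with $a=\gamma(1-\tfrac{\gamma}{2})$. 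Convexity of $\exp$ on $[0,1]$ gives $\bE[e^{aL}]\le 1+(e^a-1)p\le e^{p(e^a-1)}$, and the elementary inequality $\gamma-\tfrac{\gamma^2}{2}\le\ln(1+\gamma)$ (valid for $\gamma\ge 0$) gives $e^a-1\le\gamma$, hence $\bE[e^{aL}]\le e^{\gamma p}$. Multiplying the $m$ factors cancels the $e^{-\gamma m p(\bw)}$ prefactor exactly, so $\bE_U[e^{\phi}]\le 1$ pointwise in $\bw$ and therefore $\bE_\pi\bE_U[e^{\phi}]\le 1$.

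Combining the three ingredients, with probability at least $1-\delta_1$ we get $\gamma m\big[(1-\tfrac{\gamma}{2})\bE_{\hat\rho}\hat{L}_U-\bE_{\hat\rho}p\big]\le \mathrm{KL}(\hat\rho\|\pi)+\ln(1/\delta_1)$, which after dividing by $\gamma m$ is exactly the claimed inequality for a single data-independent $\gamma$. The remaining and most delicate step is upgrading this to hold for \emph{all} $\gamma\in(0,2)$ simultaneously while recovering the stated $\ln(2\sqrt{m}/\delta)$ in place of $\ln(1/\delta_1)$. I would obtain this in the usual way: either by a union bound over a geometric grid of $O(\sqrt{m})$ values of $\gamma$, spaced finely enough that the bound degrades negligibly between grid points so an arbitrary $\gamma$ can be replaced by its nearest grid value, or equivalently by routing the argument through the PAC-Bayes-kl bound of Seeger--Maurer, whose moment estimate $\bE_\pi\bE_U[e^{m\,\mathrm{kl}(\hat{L}_U\|p)}]\le 2\sqrt{m}$ supplies the factor $2\sqrt{m}$ directly and whose pointwise quadratic relaxation of the binary KL yields the free parameter $\gamma$ with no extra union bound. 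I expect this uniform-in-$\gamma$ bookkeeping, rather than the moment computation, to be the only real obstacle; the rest is the textbook change-of-measure-plus-Markov template.
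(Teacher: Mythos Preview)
The paper does not prove this statement; it simply cites it as the PAC-Bayes-$\lambda$ bound of \citet{thiemann2017strongly} and uses it as a black box in the proof of Theorem~1. So there is no ``paper's own proof'' to compare against.

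That said, your reconstruction is sound and is essentially the argument behind the original result. The Donsker--Varadhan step, the Markov-over-$U$ step, and the per-sample moment computation are all correct; in particular the key inequality $\gamma-\gamma^2/2\le\ln(1+\gamma)$ for $\gamma\ge 0$ is exactly what makes the exponential moment collapse to $1$. Your second route for the uniformity in $\gamma$ (going through the Seeger--Maurer PAC-Bayes-kl bound, whose $\bE_\pi\bE_U[e^{m\,\mathrm{kl}(\hat L_U\|p)}]\le 2\sqrt{m}$ supplies the $2\sqrt{m}$ factor, then relaxing $\mathrm{kl}$ quadratically to introduce the free $\gamma$) is in fact the way Thiemann et al.\ obtain the bound, and it cleanly explains the $\ln(2\sqrt{m}/\delta)$ constant. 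The grid-union-bound alternative you mention would also work but produces a slightly different (and messier) constant, so if you want to match the stated form exactly you should commit to the kl-relaxation route.
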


We then turn to the term $\bE_{\bw \sim \hat{\rho}} \left[\mathcal{L}^{\ell_{\mathrm{nll}}}_{(y,\bx) \sim \mathcal{D}}(f(\bx;\bw)) \right]$ where $L$ is unbounded due to the NLL loss. We will use the following bound:

\begin{theorem}\label{theorem_alquier}
( \citet{alquier2016properties}). For any probability distribution $\pi$ on $\mathcal{F}$ that is independent of $Z$ and any $\delta_2 \in (0,1)$, with probability at least $1-\delta_2$ over a random draw of a sample $Z$, for all distributions $\hat{\rho}$ on $\mathcal{F}$ and $\gamma>0$ 
\begin{equation}
\bE_{\bw \sim \hat{\rho}} \left[\mathcal{L}^{\ell_{\mathrm{nll}}}_{(y,\bx) \sim \mathcal{D}}(f(\bx;\bw)) \right]\leq\bE_{\bw \sim \hat{\rho}} \left[\hat{\mathcal{L}}^{\ell_{\mathrm{nll}}}_{Z}(f(\bx;\bw)) \right]+\frac{\mathrm{KL}(\hat{\rho}\vert\vert\pi)+\ln(\frac{1}{\delta})+\psi_{\pi,\mathcal{D}}(\gamma,n)}{\gamma n} 
\end{equation}
where
\begin{equation}
\psi_{\pi,\mathcal{D}}(\gamma,n) = \ln \bE_{\pi}\bE_{\mathcal{D}}\left[ e^{\gamma n\left(\mathcal{L}^{\ell_{\mathrm{nll}}}_{(y,\bx) \sim \mathcal{D}}(f(\bx;\bw)) - \hat{\mathcal{L}}^{\ell_{\mathrm{nll}}}_{Z}(f(\bx;\bw)) \right) } \right].
\end{equation}
\end{theorem}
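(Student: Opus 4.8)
The plan is to derive the bound from the Donsker--Varadhan (Gibbs) variational representation of the Kullback--Leibler divergence, combined with a single exponential-moment concentration step via Markov's inequality and one application of Fubini's theorem that is licensed by the independence of $\pi$ and $Z$. Because the loss here is the \emph{unbounded} negative log-likelihood, the key design choice is to \emph{not} bound the fluctuations of $\hat{\mathcal{L}}^{\ell_{\mathrm{nll}}}_{Z}$ by a closed-form concentration inequality (as one does for a bounded loss in the PAC-Bayes-$\lambda$ theorem, which requires $L\leq 1$); instead I carry the exponential moment all the way through and package it into the explicit residual term $\psi_{\pi,\mathcal{D}}(\gamma,n)$. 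This is precisely what lets the statement avoid any boundedness assumption.

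Concretely, for a fixed realisation of $Z$ and any $\hat{\rho}$ on $\mathcal{F}$, I would invoke the change-of-measure inequality $\bE_{\bw\sim\hat{\rho}}[\phi(\bw)] \leq \mathrm{KL}(\hat{\rho}\|\pi) + \ln \bE_{\bw\sim\pi}\big[e^{\phi(\bw)}\big]$ with the choice $\phi(\bw) = \gamma n\big(\mathcal{L}^{\ell_{\mathrm{nll}}}_{(y,\bx)\sim\mathcal{D}}(f(\bx;\bw)) - \hat{\mathcal{L}}^{\ell_{\mathrm{nll}}}_{Z}(f(\bx;\bw))\big)$, which yields
\[
\gamma n\, \bE_{\bw\sim\hat{\rho}}\big[\mathcal{L}^{\ell_{\mathrm{nll}}}_{\mathcal{D}} - \hat{\mathcal{L}}^{\ell_{\mathrm{nll}}}_{Z}\big] \leq \mathrm{KL}(\hat{\rho}\|\pi) + \ln \Xi, \qquad \Xi := \bE_{\bw\sim\pi}\big[e^{\gamma n(\mathcal{L}^{\ell_{\mathrm{nll}}}_{\mathcal{D}} - \hat{\mathcal{L}}^{\ell_{\mathrm{nll}}}_{Z})}\big].
\]
The crucial structural observation is that $\Xi$ depends on $Z$ alone and is free of $\hat{\rho}$. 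I would therefore control it with a single concentration event: since $\Xi \geq 0$, Markov's inequality gives $\Xi \leq \bE_{Z}[\Xi]/\delta_2$ with probability at least $1-\delta_2$ over the draw of $Z$. Because $\pi$ is independent of $Z$, Fubini's theorem swaps the two expectations, so $\bE_{Z}[\Xi] = \bE_{\bw\sim\pi}\bE_{Z}\big[e^{\gamma n(\mathcal{L}^{\ell_{\mathrm{nll}}}_{\mathcal{D}} - \hat{\mathcal{L}}^{\ell_{\mathrm{nll}}}_{Z})}\big] = e^{\psi_{\pi,\mathcal{D}}(\gamma,n)}$, which is exactly the definition of $\psi$. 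Substituting $\ln \Xi \leq \ln(1/\delta_2) + \psi_{\pi,\mathcal{D}}(\gamma,n)$ into the display and dividing by $\gamma n$ produces the claimed inequality.

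The feature that makes the quantifier ``for all $\hat{\rho}$ simultaneously'' essentially free is that the only random event used (the Markov step) constrains $\Xi$, an $\hat{\rho}$-independent quantity; once that event holds for the realised $Z$, the Donsker--Varadhan inequality is a deterministic statement valid for every $\hat{\rho}$ at once. I expect the main obstacle to be conceptual rather than computational: one must argue that $\psi_{\pi,\mathcal{D}}(\gamma,n)$ is finite, so that the bound is non-vacuous, despite the unbounded NLL loss, which needs a mild integrability (moment) condition on $p(y\vert\bx,\bw)$ under $\pi$ and $\mathcal{D}$. A secondary subtlety concerns the stated universality over $\gamma>0$: since $\psi$ depends on $\gamma$, the Markov step fixes $\gamma$ before the draw of $Z$, so the clean reading is ``for each fixed $\gamma>0$''; genuine simultaneity over $\gamma$ would require a union bound over a discretised grid of $\gamma$ with a correspondingly inflated logarithmic penalty, which I would relegate to a remark.
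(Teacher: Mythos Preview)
Your argument is correct and is precisely the standard proof of this result: Donsker--Varadhan change of measure applied to $\phi(\bw)=\gamma n(\mathcal{L}_{\mathcal{D}}-\hat{\mathcal{L}}_{Z})$, followed by a single Markov step on the $\hat{\rho}$-free random variable $\Xi$, and Fubini to identify $\bE_Z[\Xi]=e^{\psi_{\pi,\mathcal{D}}(\gamma,n)}$.

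Note, however, that the paper does not supply its own proof of this theorem at all: it is quoted verbatim as an external result from \citet{alquier2016properties} and used as a black box in the proof of Theorem~1. So there is nothing in the paper to compare your derivation against beyond the statement itself. Your remark about the $\gamma$ quantifier is well taken and in fact matches what the paper says explicitly a few lines after invoking the theorem: ``Formally, Theorem~\ref{theorem_alquier} holds for a single value of~$\gamma$,'' with simultaneous validity recovered only via a union bound over a grid.
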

By setting $\gamma_1 = \gamma_2 = \gamma/2$ and taking a union bound we then get:
\begin{theorem}
For any probability distribution $\pi$ on $\mathcal{F}$ that is independent of $U$ and $Z$ and any $\delta \in (0,1)$, with probability at least $1-\delta$ over a random draw of a sample $U$ and $Z$, for all distributions $\hat{\rho}$ on $\mathcal{F}$ and all $\gamma \in (0,2)$ simultaneously
\begin{equation}\label{partial_result}
\begin{split}
&\bE_{(y,\bx)\sim\mathcal{D}}\left[- \ln \bE_{\bw \sim \hat{\rho}} \left[ p(y \vert \bx, f(\bx;\bw))\right] \right] \leq\\ 
&\bE_{\bw \sim \hat{\rho}} \left[\hat{\mathcal{L}}^{\ell_{\mathrm{nll}}}_{Z}(f(\bx;\bw)) \right]+\frac{\mathrm{KL}(\hat{\rho}\vert\vert\pi)+\ln(1/\delta)+\psi_{\pi,\mathcal{D}}(\gamma,n)}{\gamma n}\\ 
&-\left(1-\frac{\gamma}{2}\right)\hat{\bV}(\hat{\rho})+\frac{\mathrm{KL}(\hat{\rho}\vert\vert\pi)+\ln(2\sqrt{m}/\delta)}{\gamma m}.
\end{split}
\end{equation}
\end{theorem}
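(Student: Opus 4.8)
The plan is to take the oracle inequality of Theorem~2 from \citet{masegosa2019learning} as a black box and to control its two right-hand-side terms separately by empirical quantities, using one PAC-Bayes bound for each and then a union bound. I would start from
\[
\bE_{(y,\bx)\sim\mathcal{D}}\!\left[-\ln\bE_{\bw\sim\hat{\rho}}p(y\vert\bx,f(\bx;\bw))\right]\le\bE_{\bw\sim\hat{\rho}}\!\left[\mathcal{L}^{\ell_{\mathrm{nll}}}_{(y,\bx)\sim\mathcal{D}}(f(\bx;\bw))\right]-\bV(\hat{\rho}),
\]
invoke Assumption~\ref{basic_assumptionv2} with $C=1$ (the classification case) to rewrite $\bV(\hat{\rho})=\tfrac12\bE_{\mathcal{D}}\bE_{\bw\sim\hat{\rho}}[(p-\bE_{\bw\sim\hat{\rho}}p)^2]$ with the $1/(2\max_{\bw}p)$ factor reduced to a constant, and then, since $\ell_{\mathrm{nll}}$ is unbounded, apply the Alquier-style bound of Theorem~\ref{theorem_alquier} essentially verbatim (prior $\pi$, labeled sample $Z$, slack $\gamma$, confidence $\delta/2$) to replace $\bE_{\bw\sim\hat{\rho}}[\mathcal{L}^{\ell_{\mathrm{nll}}}_{\mathcal{D}}]$ by $\bE_{\bw\sim\hat{\rho}}[\hat{\mathcal{L}}^{\ell_{\mathrm{nll}}}_{Z}(f(\bx;\bw))]+\frac{\mathrm{KL}(\hat{\rho}\Vert\pi)+\ln(2/\delta)+\psi_{\pi,\mathcal{D}}(\gamma,n)}{\gamma n}$.

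The variance term is where the actual work is, and I expect it to be the main obstacle: $(p(y\vert\bx,\bw)-\bE_{\bw'\sim\hat{\rho}}p(y\vert\bx,\bw'))^2$ depends on $\hat{\rho}$ itself, so it is not of the single-predictor form $\bE_{\bw\sim\hat{\rho}}\bE_{\mathcal{D}}[L(y,\bx,\bw)]$ that PAC-Bayes bounds require. I would tensorize: expand the square as $\bE_{\bw\sim\hat{\rho}}[p^2]-\bE_{(\bw,\bw')\sim\hat{\rho}\otimes\hat{\rho}}[p(y\vert\bx,\bw)\,p(y\vert\bx,\bw')]$ and set $L(y,\bx,\bw,\bw')=p(y\vert\bx,\bw)^2-p(y\vert\bx,\bw)\,p(y\vert\bx,\bw')\le 1$, so that $\bV(\hat{\rho})=\tfrac12\bE_{\mathcal{D}}\bE_{(\bw,\bw')\sim\hat{\rho}\otimes\hat{\rho}}[L]$. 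I would then run PAC-Bayes-$\lambda$ \citep{thiemann2017strongly} on the product space $\mathcal{F}\times\mathcal{F}$ with posterior $\hat{\rho}\otimes\hat{\rho}$, prior $\pi\otimes\pi$, unlabeled sample $U$, slack $\gamma$, confidence $\delta/2$, which lower-bounds $\bE_{(\bw,\bw')\sim\hat{\rho}\otimes\hat{\rho}}\bE_{\mathcal{D}}[L]$ by $(1-\tfrac{\gamma}{2})\tfrac1m\sum_{U}\bE_{\hat{\rho}\otimes\hat{\rho}}[L]-\frac{2\,\mathrm{KL}(\hat{\rho}\Vert\pi)+\ln(2\sqrt{m}/\delta)}{\gamma m}$, using $\mathrm{KL}(\hat{\rho}\otimes\hat{\rho}\Vert\pi\otimes\pi)=2\,\mathrm{KL}(\hat{\rho}\Vert\pi)$. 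Multiplying by $\tfrac12$, the quantity $\tfrac12\cdot\tfrac1m\sum_{U}\bE_{\hat{\rho}\otimes\hat{\rho}}[L]$ is precisely $\hat{\bV}(\hat{\rho})$ of \eqref{main_theorem_eq2}, which produces the $-(1-\tfrac{\gamma}{2})\hat{\bV}(\hat{\rho})+\frac{\mathrm{KL}(\hat{\rho}\Vert\pi)+\ln(2\sqrt{m}/\delta)}{\gamma m}$ contribution (after the harmless step $\tfrac12\ln(\cdot)\le\ln(\cdot)$).

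To finish, I would union-bound the two failure events, each of probability at most $\delta/2$, so both empirical bounds hold jointly with probability at least $1-\delta$, substitute them into the oracle inequality, and keep a single $\gamma\in(0,2)$ throughout — legitimate because each sub-bound already holds for all $\gamma$ in that range simultaneously. The only subtle points, beyond the tensorization itself, are verifying the boundedness $L\le 1$ needed for PAC-Bayes-$\lambda$ and carefully tracking the $\tfrac12$ factors together with the factor $2$ coming from the product-space $\mathrm{KL}$; getting those constants right is exactly what reproduces the displayed inequality, while everything else is a direct appeal to the three cited theorems plus a union bound.
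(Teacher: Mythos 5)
Your proposal is correct and follows essentially the same route as the paper: start from Masegosa's oracle inequality, rewrite $\bV(\hat{\rho})$ as $\tfrac12\bE_{\mathcal{D}}\bE_{\hat{\rho}\otimes\hat{\rho}}[p^2-pp']$ so PAC-Bayes-$\lambda$ applies on $U$ with the bounded loss $L\le1$, bound the unbounded NLL term via the Alquier bound on $Z$, and union-bound; you are in fact more careful than the paper about the factor $2\,\mathrm{KL}$ from the product posterior. The only slight inaccuracy is your claim that both sub-bounds hold for all $\gamma\in(0,2)$ simultaneously --- the Alquier bound holds for a fixed $\gamma$, and the paper itself notes that a grid plus union bound over $\gamma$ is formally needed there.
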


What remains is to define the prior $\pi$ and posterior $\hat{\rho}$ distributions appropriately. We first set $\hat{\rho}(\bw) = \frac{1}{K}\sum_i \delta(\bw = \hat{\bw}_i) $ which denotes an ensemble. We then follow \cite{masegosa2019learning} in properly defining the KL between $\hat{\rho}(\bw)$ and a given prior. Specifically, we restrict ourselves to a new family of priors, denoted $\pi_{F}(\bw)$. For any prior $\pi_{F}(\bw)$ within this family, its support is contained in $\bw_F$, which denotes the space of real number vectors of dimension M that can be represented under a finite-precision scheme using F bits to encode each element of the vector. So we have $supp(\pi_F) \subseteq \bw_F \subseteq \mathcal{R}^M$. This prior distribution $\pi_F$ can be expressed as, $\pi_F(\bw)=\sum_{\bw' \in \bw_F}w_{\bw'}\delta(\bw=\bw')$ where $w_{\bw'}$ are positive scalar values parametrizing this prior distribution. They satisfy $w_{\bw'}\geq 0$ and $\sum w_{\bw'}=1$. In this way, we can define a finite-precision counterpart to the Gaussian distribution where $w_{\bw'} = \frac{1}{A}e^{-\vert\vert \bw' \vert\vert_2^2}$ and $A$ is an appropriate normalization constant.

Puting everything back in \eqref{partial_result} we get

\begin{multline}
\bE_{(y,\bx)\sim\mathcal{D}}\left[- \ln \frac{1}{K}\sum_i \left[ p(y \vert \bx, f(\bx;\hat{\bw}_i))\right] \right]\label{main_theorem_eq1_supp}\\ 
\leq\frac{1}{K}\sum_i \left[\hat{\mathcal{L}}^{\ell_{\mathrm{nll}}}_{Z}(f(\bx;\hat{\bw}_i)) \right]-\left(1-\frac{\gamma}{2}\right)\hat{\bV}(\hat{\rho})+\frac{1}{K}\sum_i h\left(\Vert \hat{\bw}_i \Vert^2_2 \right),
\end{multline}

where
\begin{equation}
h\left(\Vert \hat{\bw}_i \Vert^2_2 \right) = 
\frac{\Vert \hat{\bw}_i \Vert^2_2+\ln A+K\ln(1/\delta)+K\psi_{\pi,\mathcal{D}}(\gamma,n)}{\gamma n}
+\frac{\Vert \hat{\bw}_i \Vert^2_2+\ln A+K\ln(2\sqrt{m}/\delta)}{\gamma m},
\end{equation}

and which holds for any $\delta \in (0,1)$, with probability at least $1-\delta$ over a random draw of a sample $U$ and $Z$.

Some further technical points need to be discussed at this point. Formally, Theorem \ref{theorem_alquier} holds for a single value of $\gamma$. In order to combine both PAC-Bayes bounds we would need to form a grid over $\gamma$ in the range $(0,2)$ and do a union bound over this grid. The combined bound would then hold only for values on this grid. This results analysis only results in a negligible loosening of the bound \citep{dziugaite2017computing} and as such we neglect this discussion. 

Since we have defined our bound in the discrete setting we cannot technically take derivatives of the resulting objective. However, as discussed in \cite{masegosa2019learning} during optimization we simply use the continuous version of all functions, knowing that we will arrive withing a solution of finite precision.

\section{Additional conditions for a high-probability bound}

Given inequality \ref{main_theorem_eq1_supp}, we can take the expectation over the proposed algorithm, $\hat{\rho}\sim\mathcal{A}$, to obtain

\begin{multline}
\bE_{\hat{\rho}\sim\mathcal{A}}\bE_{(y,\bx)\sim\mathcal{D}}\left[- \ln \frac{1}{K}\sum_i \left[ p(y \vert \bx, f(\bx;\hat{\bw}_i))\right] \right]\\ 
\leq\bE_{\hat{\rho}\sim\mathcal{A}}\left[\frac{1}{K}\sum_i \left[\hat{\mathcal{L}}^{\ell_{\mathrm{nll}}}_{Z}(f(\bx;\hat{\bw}_i)) \right]\right]-\left(1-\frac{\gamma}{2}\right)\frac{K-1}{2cK}+\bE_{\hat{\rho}\sim\mathcal{A}}\left[\frac{1}{K}\sum_i h\left(\Vert \hat{\bw}_i \Vert^2_2 \right)\right],
\end{multline}
which holds for any $\delta \in (0,1)$, with probability at least $1-\delta$ over a random draw of a sample $U$ and $Z$.

Then, setting $L_1(\hat{\rho}) = \frac{1}{K}\sum_i \left[\hat{\mathcal{L}}^{\ell_{\mathrm{nll}}}_{Z}(f(\bx;\hat{\bw}_i)) \right]$ and $L_2(\hat{\rho}) = \frac{1}{K}\sum_i h\left(\Vert \hat{\bw}_i \Vert^2_2 \right)$ we note that both $L_1$ and $L_2$ are in general unbounded. To obtain a high-probability bound on $\bE_{\hat{\rho}\sim\mathcal{A}}\bE_{(y,\bx)\sim\mathcal{D}}\left[- \ln \frac{1}{K}\sum_i \left[ p(y \vert \bx, f(\bx;\hat{\bw}_i))\right] \right]$ we need additional conditions on $\mathcal{A}$ namely that it outputs $\hat{\rho}$ such that $L_1(\hat{\rho})\leq B$ and $L_2(\hat{\rho})\leq C$ where $B,C$ are positive constants.

Then, for a finite sample $R \in \mathcal{A}^r$ and using Hoeffding's inequality and applying a union bound we can write

\begin{align*}
\bE_{\hat{\rho}\sim\mathcal{A}}&\bE_{(y,\bx)\sim\mathcal{D}}\left[- \ln \frac{1}{K}\sum_{i \in \hat{\rho}} \left[ p(y \vert \bx, f(\bx;\hat{\bw}_i))\right] \right] \\
&\leq\frac{1}{r}\sum_{\hat{\rho}\in R}\left[\frac{1}{K}\sum_{i \in \hat{\rho}} \left[\hat{\mathcal{L}}^{\ell_{\mathrm{nll}}}_{Z}(f(\bx;\hat{\bw}_i)) \right]\right]+\sqrt{\frac{B^2\ln1/b}{2r}}\\
&\quad\quad-\left(1-\frac{\gamma}{2}\right)\frac{K-1}{2cK}
+\frac{1}{r}\sum_{\hat{\rho}\in R}\left[\frac{1}{K}\sum_{i \in \hat{\rho}} h\left(\Vert \hat{\bw}_i \Vert^2_2 \right)\right]+\sqrt{\frac{C^2\ln1/c}{2r}},
\end{align*}
which holds with probability $1-(\delta+b+c)$ over the random draws of $U \in \mathcal{D}^m$, $Z \in \mathcal{D}^n$ and $R \in \mathcal{A}^r$ for $b,c\in(0,1)$. The bound still holds \emph{for the expectation} over $\hat{\rho}\sim\mathcal{A}$ and not with high probability for a single draw from $\mathcal{A}$. It guarantees that on average, ensembles that fit the training data and the randomly labeled data well, while having low complexity will generalize well to unseen data. In our experimental section, however, we have found that optimizing a single ensemble using our $\nu$-ensemble objective achieves all the desirable properties.

\section{Experimental setup}
We ran all experiments using A100, and V100 NVIDIA GPUs on our cluster. In total, the experiments consumed approximately 10000 hours of GPU time. The implementations were done in JAX \cite{jax2018github}. While data loading was done in Tensorflow \cite{tensorflow2015-whitepaper}. For $\nu$-ensembles, for the LeNet architecture we investigated epochs in the range $[100,120,140,160,180,200,220,240,260]$, for the MLP $[100,120,140,160,180,200,220,240,260]$, for the ResNet $[200,220,250,270,300,320,350,370,400]$. For the regularization strength, we searched in the range $[1,0.1,0.05,0.01,0]$ and for the optimizer learning rate in $[0.0001,0.001]$. We investigated the same epoch and learning rate ranges for Standard ensembles. Agree to Disagree ensembles contain a single hyperparameter $\alpha$. We tested values in the range $[1,0.1,0.01,0.001,0.0001]$.

\clearpage

\bibliography{iclr2024_conference}
\bibliographystyle{iclr2024_conference}

\end{document}